\theoremstyle{definition}
\newtheorem{proposition}{Proposition}
\newtheorem{definition}{Definition}
\newtheorem{remark}{Remark}
\newtheorem{example}{Example}
\newtheorem{corollary}{Corollary}
\begin{document}

\title{DRoPE: Directional Rotary Position Embedding \\for Efficient Agent Interaction Modeling}

\author{
    Jianbo Zhao$^{1,2}$, Taiyu Ban$^{1}$, Zhihao Liu$^{2}$, Hangning Zhou$^{2,\dagger}$\textsuperscript{\Letter}, Xiyang Wang$^{2}$, Qibin Zhou$^{2}$, Hailong~Qin$^{3}$, Mu~Yang$^{2}$, Lei~Liu$^{1}$\textsuperscript{\Letter}, Bin Li$^{1}$
    \thanks{$^\dagger$ Project Leader: Hangning Zhou (hangning.zhou@mach-drive.com).}
    \thanks{\textsuperscript{\Letter} Corresponding authors: Lei Liu (liulei13@ustc.edu.cn), Hangning Zhou (hangning.zhou@mach-drive.com).}
    \thanks{$^1$ Jianbo Zhao (zjb123@mail.ustc.edu.cn), Taiyu Ban, Lei Liu, and Bin Li are with the University of Science and Technology of China, 96 Jinzhai Rd, Hefei 230026, China.}
    \thanks{$^2$ Jianbo Zhao, Zhihao Liu, Hangning Zhou, Xiyang Wang, Qibin Zhou, and Mu Yang are with Mach Drive, Beijing, China.}
    \thanks{$^3$ Hailong Qin is with Temasek Laboratories, National University of Singapore, Singapore.}
}

\maketitle

\begin{abstract}
Accurate and efficient modeling of agent interactions is essential for trajectory generation, the core of autonomous driving systems. Existing methods, scene-centric, agent-centric, and query-centric frameworks, each present distinct advantages and drawbacks, creating an impossible triangle among accuracy, computational time, and memory efficiency. To break this limitation, we propose Directional Rotary Position Embedding (DRoPE), a novel adaptation of Rotary Position Embedding (RoPE), originally developed in natural language processing. Unlike traditional relative position embedding (RPE), which introduces significant space complexity, RoPE efficiently encodes relative positions without explicitly increasing complexity but faces inherent limitations in handling angular information due to periodicity. DRoPE overcomes this limitation by introducing a uniform identity scalar into RoPE's 2D rotary transformation, aligning rotation angles with realistic agent headings to naturally encode relative angular information. We theoretically analyze DRoPE's correctness and efficiency, demonstrating its capability to simultaneously optimize trajectory generation accuracy, time complexity, and space complexity. Empirical evaluations compared with various state-of-the-art trajectory generation models, confirm DRoPE's good performance and significantly reduced space complexity, indicating both theoretical soundness and practical effectiveness. The video documentation is available at \url{https://drope-traj.github.io/}.
\end{abstract}

\begin{IEEEkeywords}
Deep learning methods, Trajectory generation, Autonomous driving, Relative position embedding.
\end{IEEEkeywords}

\section{Introduction}

\IEEEPARstart{M}{odeling} agent interactions is crucial for trajectory generation, often regarded as the ``brain" of deep learning-based autonomous driving (AD) \cite{jia2025drivetransformer, 10629039, jia2024bench2drive}. Agent interactions typically involve agents' relative spatial positions and velocities, alongside their inherent features. Current agent interaction modeling methods primarily fall into three frameworks: scene-centric, agent-centric, and query-centric \cite{shi2025motion}.

Among these methods, the scene-centric framework is known to exhibit inferior performance due to its use of absolute positions as direct inputs, impairing its ability to effectively capture spatial relationships between agents. Conversely, agent-centric and query-centric frameworks leverage relative position and yield better accuracy. However, agent-centric methods suffer from high time complexity \cite{zhang2024trafficbots}, while query-centric methods experience increased space complexity \cite{zhou2023query}. As a result, these three approaches form an ``impossible triangle", where no single method can simultaneously optimize accuracy, time complexity, and space complexity, as illustrated in Fig. \ref{fig:impTri}.

Specifically, agent-centric approaches designate a focal agent as the origin and transform other agents' coordinates accordingly. Although intuitive for modeling relative positions, this requires repeated coordinate transformations, training, and inference for each agent, leading to an \(N\)-fold increase in computational time complexity, where \(N\) denotes the number of agents. In contrast, query-centric methods typically utilize Relative Position Embeddings (RPE) \cite{zhou2023query} to encode inter-agent relative positions, allowing simultaneous trajectory inference for all agents. However, RPE substantially increases space complexity, by a factor of \(N\), due to the explicit representation of relative positions among agents (see Fig. \ref{fig:rpe}(a)).

\begin{figure}[!t]
    \centering
    \includegraphics[width=0.3\textwidth]{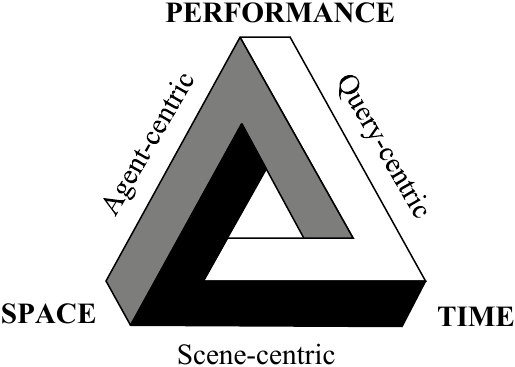}
    \caption{The impossible triangular of current trajectory generation methods.}
    \label{fig:impTri}
\end{figure}

Recently, a novel RPE technique from the natural language processing domain, named Rotary Position Embedding (RoPE) \cite{su2024roformer}, proposes an efficient global position-driven method for embedding relative positions. RoPE embeds global token positions into query-key (QK) vectors using consecutive 2D rotary transformations, naturally encoding relative positions into attention weights through rotary transformations of vector dot products. Since RoPE avoids explicitly representing relative positions between token pairs, it maintains the space and computational complexity of the transformer \cite{vaswani2017attention} model (see Fig. \ref{fig:rpe}(b)). RoPE thus opens a promising avenue for breaking the impossible angle encountered by existing trajectory generation methods.

\begin{figure}[!t]
    \centering
    \includegraphics[width=1.\linewidth]{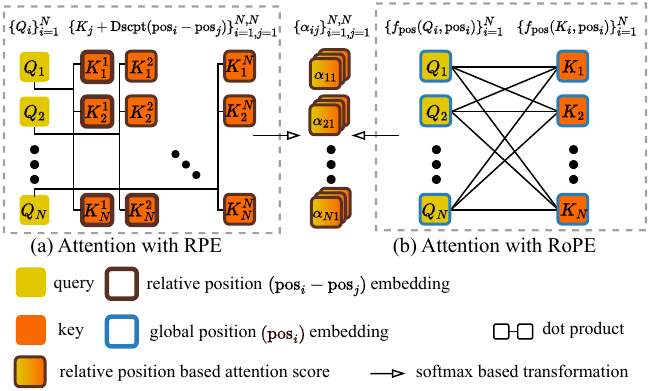}
    \caption{RPE \textit{v.s.} RoPE in terms of space complexity.}
    \label{fig:rpe}
\end{figure}

However, directly applying RoPE to trajectory generation \cite{xu2023context, zhou2023qcnext} is infeasible because it cannot naturally represent agent headings, owing to the inherent periodicity of angular information. Unlike RPE, which directly handles modular transformations of relative angles, RoPE can only operates on global angles, rendering relative angles implicit and inaccessible. Thus, RoPE struggles with periodic angular relations essential in trajectory prediction since it fails to address modular transformations (see Fig.~\ref{fig:rope_angle}).

To address this gap, we propose Directional Rotary Position Embedding (DRoPE), a novel adaptation of RoPE designed specifically for periodic angle modeling. DRoPE introduces a uniform identity scalar into the 2D rotary transformation, effectively aligning rotation angles with realistic agent headings. This modification creates a consistent mapping from real-number fields to periodic angular domains, enabling DRoPE to handle angles naturally within the rotary embedding framework.
By integrating DRoPE with RoPE, we naturally embed both relative positions and headings of agents without significantly increasing computational or space complexity.

We provide thorough theoretical analyses demonstrating DRoPE's correctness and efficiency in space complexity, offering strong theoretical justification for its practical adoption. Furthermore, we empirically validate DRoPE by comparing it with various state-of-the-art models. Our experiments demonstrate that DRoPE significantly reduces space complexity while simultaneously maintaining good prediction performance.
Our contributes are summarizes as below.
 \begin{enumerate} 
 \item We introduce DRoPE, which first adapt rotary position embedding specifically tailored for trajectory generation, achieving high performance, low time complexity, and low space complexity simultaneously. 
 \item We present thorough theoretical analysis of RoPE's limitations in handling angles and rigorously demonstrate the effectiveness of DRoPE in addressing this issue. 
 \item We propose two practical DRoPE-RoPE architectures designed explicitly for agent interaction modeling, accommodating diverse practical intentions. \end{enumerate}

\begin{figure}[!t]
    \centering
    \includegraphics[width=1.\linewidth]{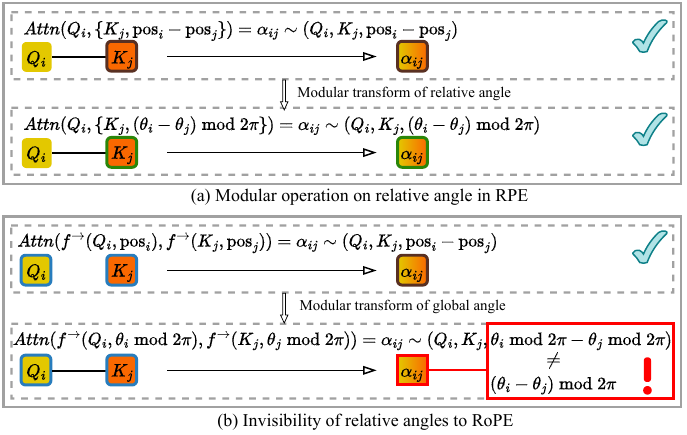}
    \caption{The infeasibility of RoPE in handling the periodicity of angles.}
    \label{fig:rope_angle}
\end{figure}

\section{Related Work}

Modeling agent interactions in autonomous driving involves distinct methodologies, each with inherent strengths and trade-offs. Current approaches primarily fall into three frameworks: scene-centric, agent-centric, and query-centric.

\subsection{Scene-centric approaches}

Scene-centric methods \cite{chen2023categorical, hu2024solving} prioritize computational efficiency by representing all scene elements, including map features and agents, within a unified, fixed coordinate system, commonly centered on the ego vehicle. In these models, a scene encoder maps spatial and structural characteristics into latent embeddings, from which a trajectory decoder subsequently predicts the target agent's future motions. However, the representation of all agents in a single coordinate frame results in data distribution imbalances. Specifically, agents nearer to the ego vehicle are disproportionately represented, while predictive accuracy deteriorates for distant agents. Consequently, while these methods are computationally efficient, their accuracy often remains suboptimal \cite{zhang2024trafficbots}.

\subsection{Agent-centric approaches}

Agent-centric models \cite{feng2023macformer, 10803038} have been proposed to overcome the limitations of scene-centric frameworks by normalizing the coordinate system around individual agents. For a scenario involving \(N\) agents, this approach effectively increases the size and diversity of the training dataset by a factor of \(N\). While this normalization substantially improves predictive accuracy, it significantly raises computational complexity since each agent requires separate coordinate transformations and inference steps. Thus, agent-centric methods incur computational costs that scale linearly with \(N\), limiting their practicality in real-time applications.

\subsection{Query-centric approaches}

The query-centric paradigm, proposed by QCNet \cite{zhou2023query} and extensively adopted thereafter \cite{wu2024smart, zhou2023qcnext, zhou2024behaviorgpt}, attempts to balance computational efficiency with accurate spatial modeling. Unlike the agent-centric framework, query-centric methods decouple shape encoding from spatial relationship encoding. Scene elements' shapes are encoded separately, whereas relative spatial relationships are subsequently captured through multi-head attention, incorporating explicit relative position encodings into key-value vectors.
For a scene of $N$ agents, this results in an \(N^2\)-sized relative position matrix. Each relative position is encoded and embedded into key and value vectors, increasing the space complexity from \(O(N)\) to \(O(N^2)\) and leading to considerable memory overhead. Due to this high memory requirement, existing query-centric methods  typically constrain their attention mechanisms to local neighborhoods, trading off comprehensive spatial interactions for reduced memory usage \cite{wu2024smart, zhang2025closed}.

In this paper, we introduce Directional Rotary Positional Embeddings (DRoPE) to develop a query-centric method that maintains a spatial complexity of \(O(N)\) while preserving the same time complexity. Our method achieves competitive performance compared to approaches with \(O(N^2)\) spatial complexity.

\section{Directional RoPE and Theoretical Analysis}

This section introduces the proposed Directional RoPE mechanism and the theoretical analysis of its correctness.
We begin with the traditional RPE mechanism and its bad space complexity $O(N^2)$ in parallel computation.
Subsequently, we introduce RoPE and show its effective space complexity $O(N)$ surpassing RPE.
Then, we analyze the infeasibility of RoPE in representing angle distances, used to model heading directions of a motion.
Lastly, we introduce our directional RoPE that addresses this misalignment and demonstrates its correctness.

\subsection{Relative positional embedding}

To set the stage, we define symbols useful for the theoretical analysis.
Let \(E=\{E_i\}_{i=0}^{N-1}\) be the set of investigated tokens\footnote{Token $E_i$ and token $i$ are used interchangeably in the paper.} where $E_i\in \mathbb{R}^{d_e}$.
Suppose that there are \( H \) attention heads for each token, which are query $\{Q_{i}^h\}_{{i=1},{h=1}}^{N,H}$ and key $\{K_{i}^h\}_{{i=1},{h=1}}^{N,H}$ vectors where $Q_i^h,K_i^h\in \mathbb{R}^{d_k}$, and value vectors $\{V_{i}^h\}_{{i=1},{h=1}}^{N,H}$ where \(V_{i}^h\in \mathbb{R}^{d_v}\).
In the rest contents of this section, we refer to a QKV vector omitting its superscript, like $Q_i$ for token $E_i$ instead of $Q_i^h$ because $h$ is not relevant for analysis.

Now we delve into the calculation of output for each token. For token $i$, it first conducts attention with all tokens by:
\begin{equation}
\label{eq:attention_score_base}
\alpha_{ij} = \mathrm{softmax} \left( \frac{\langle Q_i, K_j \rangle}{\sqrt{d_k}} \right),\,\, j=1,\cdots,N,
\end{equation}
where $\langle A,B\rangle$ represents dot product $A^T B$.
Then the output of token $i$ is derived by an attention-based weighted sum:
\begin{equation}
\bar{O}_i = \sum_{j=1}^N \beta_{ij} = \sum_{j=1}^{N} \alpha_{ij} V_j.
\label{eq:mult_attention_outcome}
\end{equation}
For multiple heads, this process is repeated, and the outcomes are concentrated and injected into a vector $O_i$. We denote these operations as $\text{MHSA}(E)$.

In the parallel calculation setting where all numerical operations are processed simultaneously, the space used to store the inputs of the attention process is given by the following result.

\begin{proposition}
\label{prop:mha_space}
The space complexity for multi-head attention inputs under parallel computation is \(\mathcal{O}(NH(2d_k+d_v))\).
\end{proposition}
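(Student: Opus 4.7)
The plan is to prove Proposition~\ref{prop:mha_space} by directly enumerating the scalar entries that must be held in memory simultaneously when all multi-head attention inputs are materialized for parallel computation. Because the claim is purely a counting statement, no inequality chasing or structural argument is needed; the proof reduces to identifying what the attention mechanism requires as input and adding up the dimensions.

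First, I would restate the setup carefully: the inputs to the attention computation consist, for every token $i \in \{1, \ldots, N\}$ and every head $h \in \{1, \ldots, H\}$, of the three vectors $Q_i^h \in \mathbb{R}^{d_k}$, $K_i^h \in \mathbb{R}^{d_k}$, and $V_i^h \in \mathbb{R}^{d_v}$, as defined in the paragraph preceding equation~\eqref{eq:attention_score_base}. These are precisely the quantities consumed by the dot-product and weighted-sum operations in equations~\eqref{eq:attention_score_base} and~\eqref{eq:mult_attention_outcome}.

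Next, I would count: a single $(i,h)$ pair contributes $d_k + d_k + d_v = 2d_k + d_v$ scalars. Since there are $N$ tokens and $H$ heads, and since the parallel-computation assumption forbids reusing storage across either index (all heads and all tokens are processed at once and therefore must coexist in memory), the total number of scalars required is exactly $N \cdot H \cdot (2d_k + d_v)$. Wrapping this in big-$\mathcal{O}$ notation yields the stated bound $\mathcal{O}(NH(2d_k+d_v))$.

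There is no genuine mathematical obstacle here; the only point worth emphasizing is the justification for the product structure. Specifically, I would highlight why the $H$ factor does not collapse: in a sequential implementation one could in principle recycle the memory of a head after its attention output is written out, but the parallel setting assumed by the proposition precludes this, so the per-head costs accumulate additively over $H$. Once this is made explicit, the proof is essentially a one-line multiplication of dimensions.
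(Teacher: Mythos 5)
Your proposal is correct and follows essentially the same argument as the paper's proof: count $2d_k + d_v$ scalars per token per head and multiply by $N$ and $H$. The extra remark about why the $H$ factor does not collapse under parallelism is a harmless elaboration, not a different route.
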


\begin{proof}
Each token $E_i$ has query, key, and value vectors per head, where $Q_i^h, K_i^h \in \mathbb{R}^{d_k}$ and $V_i^h \in \mathbb{R}^{d_v}$. The space required for a single head per token is $(2d_k + d_v)$. With $H$ heads and $N$ tokens, the total space is $N H (2d_k + d_v)$. Thus, the space complexity is $\mathcal{O}(NH(2d_k + d_v))$.
\end{proof}

Next, we focus on the well-known relative positional encoding (RPE) and its influence on space complexity.

\begin{definition}[RPE]
\label{def:RPE}
Given $\text{pos}_i\in \mathbb{R}^{p}$ denoting the position of token $i$, RPE calculates the attention weights by:
\begin{gather}
    \alpha_{ij} = \mathrm{softmax} \left(
  \frac{\langle Q_i, K_{ij} \rangle}{\sqrt{d_k}}
\right), \label{eq:rpe_attention}\\
\text{where } K_{ij} = K_j + \text{Dscpt}_k(\text{pos}_i-\text{pos}_j). \label{eq:rpe_k_embedding}
\end{gather}
Here, $\text{Dscpt}_k: \mathbb{R}^p \rightarrow \mathbb{R}^{d_k}$ represents a learnable encoder.
Then the outcomes of $E_i$ are calculated by:
\begin{gather}
    \bar{O}_i = \sum_{j=1}^N \beta_{ij} = \sum_{j=1}^{N} \alpha_{ij} V_{ij},\label{eq:rpe_weighted_sum}\\
    \text{where }V_{ij} = V_j + \text{Dscpt}_v(\text{pos}_i-\text{pos}_j). \label{eq:rpe_v_embedding}
\end{gather}
Here, $\text{Dscpt}_v:\mathbb{R}^p \rightarrow \mathbb{R}^{d_v}$ is a learnable encoder.

\end{definition}

The essential characteristic of RPE is to encode the relative position between tokens in the attention weights, whose general form is defined below.

\begin{definition}[General RPE functions]
    A function that embeds the relative position between tokens is defined by:
    \begin{equation}
    \beta_{ij} = f(Q_i,K_j,V_j,\text{pos}_i - \text{pos}_j),
    \label{eq:general_RPE_functions}
\end{equation}
where the attention-weighted value $\beta_{ij}$ of tokens $i$ and $j$ are dependent and only dependent on their the relative position $\text{pos}_i - \text{pos}_j$ except for the related QKV vectors.
\label{def:general_RPE_functions}
\end{definition}

\begin{remark}
\label{remark:rpe_relative_position}
Note that when the position refers to an angle $\theta_i$ where $\theta_i + 2\pi = \theta_i$.
The relative position is calculated by:
\begin{equation}
    (\theta_i - \theta_j) \bmod 2\pi,
\end{equation}
to maintain the alignment with the angle definition.
\end{remark}

\begin{proposition}
\label{prop:mha_rpe_space}
The space complexity for RPE (by Definition \ref{def:RPE}) inputs  in parallel computation is \(\mathcal{O}(N^2H(d_k+d_v))\).
\end{proposition}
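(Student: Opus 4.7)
The plan is to identify every tensor that must reside in memory when an RPE attention layer is executed in parallel, and then sum their sizes head by head. First, I would revisit Definition \ref{def:RPE} and emphasize that the attention score in \eqref{eq:rpe_attention} contracts $Q_i$ with a \emph{pair-indexed} key $K_{ij}=K_j+\text{Dscpt}_k(\text{pos}_i-\text{pos}_j)$, and that the weighted sum in \eqref{eq:rpe_weighted_sum} analogously uses a pair-indexed value $V_{ij}=V_j+\text{Dscpt}_v(\text{pos}_i-\text{pos}_j)$. In contrast to the situation of Proposition \ref{prop:mha_space}, where each token $i$ contributes a single $K_i$ and a single $V_i$, here every ordered pair $(i,j)$ contributes its own $K_{ij}\in\mathbb{R}^{d_k}$ and $V_{ij}\in\mathbb{R}^{d_v}$. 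Extracting this structural difference is the conceptual core of the proof.

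Next, I would carry out a per-head tally. The $N^{2}$ pair-keys occupy $N^{2}d_k$ slots, the $N^{2}$ pair-values occupy $N^{2}d_v$ slots, and the $N$ queries account for an additional $Nd_k$ that is dominated by the quadratic terms. Multiplying across the $H$ heads yields $HN^{2}(d_k+d_v)+\mathcal{O}(NHd_k)$, which collapses to the claimed $\mathcal{O}(N^{2}H(d_k+d_v))$. The encoders $\text{Dscpt}_k$ and $\text{Dscpt}_v$ themselves carry a storage cost that depends only on their parameter counts, so they contribute no $N$-dependent term and can be absorbed into lower-order noise.

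The step I expect to be most delicate is justifying that the pair-indexed tensors \emph{must} be materialized simultaneously, rather than reconstructed on the fly from the $N$ base vectors and the $N$ positions. My argument will appeal directly to the parallel-computation setting already adopted in Proposition \ref{prop:mha_space}: since every increment $\text{Dscpt}_k(\text{pos}_i-\text{pos}_j)$ has to coexist with $K_j$ before being contracted against $Q_i$, and likewise for the value side, all $N^{2}$ augmented vectors are alive at the same instant. A sequential implementation that streams these quantities could dodge the quadratic blow-up, so this is precisely where a reader might push back; however, under the parallelism convention the paper has fixed, the $\mathcal{O}(N^{2}H(d_k+d_v))$ bound is tight and the proposition follows.
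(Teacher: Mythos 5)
Your proposal is correct and follows essentially the same route as the paper's own proof: count the $N^2$ pair-indexed intermediates $K_{ij}\in\mathbb{R}^{d_k}$ and $V_{ij}\in\mathbb{R}^{d_v}$ per head, multiply by $H$, and absorb the $\mathcal{O}(NH(2d_k+d_v))$ base QKV storage as a lower-order term. Your added justification that these tensors must coexist under the parallel-computation convention only makes explicit an assumption the paper takes for granted.
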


\begin{proof}
The space required for a single head per token is $(2d_k + d_v)$. 
For the intermediate variables $K_{ij}\in \mathbb{R}^{d_k},V_{ij}\in \mathbb{R}^{d_v}$ used for calculating attention weights in Equation \eqref{eq:rpe_attention} and for calculating attention-weighted sum in Equation \eqref{eq:rpe_weighted_sum}, there are $N^2$ individual such terms for all $i,j\in\{1,2,\cdots,N\}$.
Given $H$ heads, these variables requires $O(N^2H(d_k+d_v))$ space for storage.
Combining with $O(NH(2d_k+d_v))$ space complexity used for QKV vectors, the space complexity for RPE inputs is $O(N^2H(d_k+d_v))$.
Hence, we complete the proof.
\end{proof}

Compared to the space complexity for multi-head attention input, which is $O(NH(2d_k+d_v))$ as shown by Proposition \ref{prop:mha_space}, we observe that RPE significantly increases the space complexity by $N$ times. In the context of motion prediction, $N$ represents the number of agents is typically large in complex realistic traffic scenarios. The increased space complexity by RPE makes it unaffordable to make attention with all agent tokens in reasonable GPU resources. Typically, researchers set $N$, the number of agents visible to the model, to a relatively small value to balance model performance with GPU resources.
Currently, the full modeling of interactions between all agents is considered impossible for RPE-based methods.

\subsection{Rotary position embedding}
Compared to RPE, rotary position embedding (RoPE) is an alternative relative position encoding approach that maintains the same space complexity of multi-attention inputs, which however faces problems when dealing with angles.
We delve into these aspects subsequently.
\begin{definition}[RoPE]
\label{def:RoPE}
Suppose that the QK vectors are in even dimensions $2d_k$.
Given $\text{pos}_i\in \mathbb{R}^{p}$ denoting the position of token $i$, RoPE calculates the attention weights by:
\begin{gather}
    \alpha_{ij} = \mathrm{softmax} \left(
  \frac{ \langle \hat{Q}_i, \hat{K}_j \rangle}{\sqrt{d_k}}
\right), \label{eq:RoPE_attention}\\
\text{where } \hat{Q}_i = f^{\rightarrow}(Q_i, \text{pos}_i),\,\,
\hat{K}_j = f^{\rightarrow}(K_j, \text{pos}_j).
\label{eq:absolute_pos_emb}
\end{gather}
The function $f^{\rightarrow}$ embeds the \textit{absolute} position of tokens in their QK vectors, formally defined by:
\begin{gather}
    f^{\rightarrow}(X,m) = \text{BlockDiag} \big( R(m\theta_0), \dots, R(m\theta_{d_k -1}) \big) X, \label{eq:block_diag}\\
    \text{where } R(m\theta_l) = \begin{bmatrix} \cos(m\theta_l) & -\sin(m\theta_l) \\ \sin(m\theta_l) & \cos(m\theta_l) \end{bmatrix}, \label{eq:R}\\
    \text{and } \theta_l = 10000^{-l/d_k},  \,  l=0,1,\cdots,d_k-1. \label{eq:theta_l}
\end{gather}
Here, \( \text{BlockDiag}(\cdot) \) represents the block diagonal matrix that applies each \( R(m\theta_i) \) to the corresponding 2D vector pair in \( X \).
Then the outcomes of token $i$ are calculated by:
\begin{gather}
    \bar{O}_i = \sum_{j=1}^N \beta_{ij} = \sum_{j=1}^{N} \alpha_{ij} V_{j}.
    \label{eq:RoPE_weighted_sum}
\end{gather}
The weighted sum process of RoPE is exactly the same as that of the multi-attention process as presented in Equation \eqref{eq:mult_attention_outcome}.
\end{definition}

We now show that RoPE satisfies the general characteristic of RPE functions illustrated in Definition \ref{def:general_RPE_functions}.

\begin{corollary}
For $Q_i,K_j\in \mathbb{R}^{2d_k}$ of tokens $i,j$, $\hat{Q}_i$ and $\hat{K}_j$ defined by Equation \eqref{eq:absolute_pos_emb}, their dot product follows:
\begin{equation}
    \langle\hat{Q}_i, \hat{K}_j\rangle = f(Q_i,K_j,\text{pos}_i-\text{pos}_j),
\end{equation}
which is a function only dependent on $Q_i,K_j$ and relative position $\text{pos}_i-\text{pos}_j$ between tokens $i$ and $j$.
\label{coro:RoPE_valid}
\end{corollary}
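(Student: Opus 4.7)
The plan is to exploit the fact that the rotary blocks in RoPE arise from a commutative group of planar rotations, so that the product of the two position-embedding matrices collapses to a single matrix parameterized by the difference of positions. I would begin by writing $\hat{Q}_i = M(m)Q_i$ and $\hat{K}_j = M(n)K_j$, where $m := \text{pos}_i$, $n := \text{pos}_j$, and $M(t) := \text{BlockDiag}(R(t\theta_0),\dots,R(t\theta_{d_k-1}))$ is the block-diagonal matrix assembled in Equation \eqref{eq:block_diag}. The dot product then factors as
\begin{equation*}
\langle \hat{Q}_i, \hat{K}_j\rangle \;=\; (M(m)Q_i)^T M(n) K_j \;=\; Q_i^T \bigl(M(m)^T M(n)\bigr) K_j,
\end{equation*}
reducing the task to analyzing the product $M(m)^T M(n)$.

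The main algebraic step is to establish the identity $M(m)^T M(n) = M(n-m)$. Since the transpose of a block-diagonal matrix is block-diagonal, I would verify this per block by showing $R(m\theta_l)^T R(n\theta_l) = R((n-m)\theta_l)$ for each $l$. This follows from two standard facts about $2\times 2$ rotation matrices, namely $R(\alpha)^T = R(-\alpha)$ and $R(\alpha)R(\beta) = R(\alpha+\beta)$, both of which are immediate from the angle-sum identities for sine and cosine applied to the explicit form in Equation \eqref{eq:R}. Assembling the individual blocks then yields the claimed matrix identity.

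With this identity in hand, the dot product simplifies to $Q_i^T M(n-m) K_j$, which depends only on $Q_i$, $K_j$, and $n-m = \text{pos}_j - \text{pos}_i$. Setting $f(Q_i, K_j, \Delta) := Q_i^T M(-\Delta) K_j$ matches the form required by Definition \ref{def:general_RPE_functions} up to the sign of the relative position, completing the verification. The main obstacle is essentially notational — keeping the block-diagonal bookkeeping clean across heads and indices — since the underlying abelian group structure of planar rotations handles the substantive content of the claim with little additional work.
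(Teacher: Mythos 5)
Your proposal is correct and follows essentially the same route as the paper's own proof: both reduce $\langle\hat{Q}_i,\hat{K}_j\rangle$ to $Q_i^T B(m_i)^T B(m_j) K_j$ and collapse the block-diagonal product blockwise via $R(\alpha)^T = R(-\alpha)$ and $R(\alpha)R(\beta)=R(\alpha+\beta)$, arriving at a matrix parameterized solely by the position difference. Your explicit handling of the sign convention via $f(Q_i,K_j,\Delta) := Q_i^T M(-\Delta)K_j$ is a harmless and correct refinement of the paper's statement.
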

\begin{proof}
We denote the block diagonal term in Equation \eqref{eq:block_diag} as $B(m)$ for the absolute position $m$, and refer to $\text{pos}_i$ as $m_i$. Then we derive the formulation of $\langle\bar{Q}_i, \bar{K}_j\rangle$:
    \begin{equation}
    \notag
    \begin{aligned}
        \langle\hat{Q}_i, \hat{K}_j\rangle &= \left(B(m_i)Q_i\right)^TB(m_j)K_j\\
        &= Q_i^T B(m_i)^T B(m_j)  K_j \\
        &= Q_i^T  \text{BlockDiag} \big( R(m_i\theta_0)^T, \dots, R(m_i\theta_{d_k -1})^T \big)\\
        &\quad\quad\,\,\,\, \text{BlockDiag} \big( R(m_j\theta_0), \dots, R(m_j\theta_{d_k -1}) \big) K_j\\
        &= Q_i^T \text{BlockDiag}\left(\{R(-m_i\theta_l)R(m_j\theta_l)\}\right) K_j\\
        &=Q_i^T \text{BlockDiag}\left(\{R(-(m_i-m_j)\theta_l)\}\right)K_j
    \end{aligned}
\end{equation}
where $R(\theta)$ is the 2D rotary matrix w.r.t. $\theta$ defined by Equation \eqref{eq:R}, and $\theta_l$ is defined by Equation \eqref{eq:theta_l}.
In this derivation, the last two equalities hold due to the properties of the rotary matrix $R^T(\theta)=R(-\theta)$ and $R(\theta_1)R(\theta_2) = R(\theta_1+\theta_2)$.

We observe that the formulation of $\langle\bar{Q}_i, \bar{K}_j\rangle$ only depend on $Q_i,K_j$, and the relative position $m_i-m_j$. Proof completed.
\end{proof}

Combing Corollary \ref{coro:RoPE_valid} with Equations \eqref{eq:RoPE_attention} and \eqref{eq:RoPE_weighted_sum}, we easily derive that RoPE satisfies the general RPE characteristic defined by Equation \eqref{eq:general_RPE_functions} in Definition \ref{def:general_RPE_functions}.

Next, we focus on the space complexity of RoPE input:

\begin{proposition}
\label{prop:rope_space}
The space complexity for RoPE (by Definition \ref{def:RoPE}) inputs  in parallel computation is \(\mathcal{O}(NH(2d_k+d_v))\).
\end{proposition}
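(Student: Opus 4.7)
The plan is to mirror the argument of Proposition \ref{prop:mha_space} and contrast it explicitly with Proposition \ref{prop:mha_rpe_space} so that the source of the improvement is transparent. The central observation is that in Definition \ref{def:RoPE} the rotary map $f^{\rightarrow}(\cdot,\text{pos}_i)$ depends only on the \emph{absolute} position of a single token, so $\hat{Q}_i$ and $\hat{K}_j$ can be computed and stored per token rather than per token pair. Consequently, no $N^2$-sized intermediate tensor analogous to the $K_{ij},V_{ij}$ of Equations \eqref{eq:rpe_k_embedding} and \eqref{eq:rpe_v_embedding} ever needs to be materialized.

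Concretely, I would proceed as follows. First, account for the raw QKV storage: exactly as in Proposition \ref{prop:mha_space}, storing $Q_i^h, K_i^h \in \mathbb{R}^{2d_k}$ and $V_i^h \in \mathbb{R}^{d_v}$ across all $N$ tokens and $H$ heads costs $\mathcal{O}(NH(2d_k+d_v))$. Second, account for the rotary transformation: since $f^{\rightarrow}(X,m)$ acts on a single vector $X$ and produces an output of the same dimension, the tensors $\{\hat{Q}_i^h\}$ and $\{\hat{K}_j^h\}$ live in $\mathbb{R}^{2d_k}$ per token per head, contributing a further $\mathcal{O}(NH\cdot 2d_k)$ (and they can in fact overwrite the original $Q,K$ buffers). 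Third, observe that the block-diagonal matrix $\mathrm{BlockDiag}(R(m\theta_0),\dots,R(m\theta_{d_k-1}))$ does not need to be stored explicitly: it is parameterized by the scalar $m$ together with the fixed constants $\theta_l$ from Equation \eqref{eq:theta_l}, so its memory footprint is $\mathcal{O}(d_k)$ globally plus $\mathcal{O}(N)$ for the positions themselves, which is absorbed into the dominant term. Fourth, the weighted sum in Equation \eqref{eq:RoPE_weighted_sum} is identical to the standard multi-head attention aggregation of Equation \eqref{eq:mult_attention_outcome}, so no additional $V$-side structures appear. Summing these contributions yields the claimed bound.

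There is essentially no hard step; the proof is a bookkeeping exercise. The only place where one must be careful is to argue that the rotary transformation truly avoids any pairwise storage, that is, that the relative-position dependence established in Corollary \ref{coro:RoPE_valid} emerges \emph{algebraically inside the dot product} $\langle \hat{Q}_i,\hat{K}_j\rangle$ rather than through any precomputed per-pair object. Once this point is stated, the asymptotic estimate collapses to the same form as in Proposition \ref{prop:mha_space}, and the conclusion $\mathcal{O}(NH(2d_k+d_v))$ follows immediately.
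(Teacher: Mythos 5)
Your proposal is correct and follows essentially the same route as the paper's own proof: both count the per-token, per-head intermediates $\hat{Q}_i,\hat{K}_j$ (only $N$ of each, contributing $\mathcal{O}(NHd_k)$) and combine this with the $\mathcal{O}(NH(2d_k+d_v))$ cost of the raw QKV vectors from Proposition~\ref{prop:mha_space}. Your additional remarks on not materializing the block-diagonal matrices and on the contrast with the $N^2$ intermediates of Proposition~\ref{prop:mha_rpe_space} are accurate elaborations rather than a different argument.
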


\begin{proof}
We observe that the intermediate variables in RoPE are $\hat{Q}_i,\hat{K}_j\in \mathbb{R}^{d_k}$.
There are $N$ individual terms for each of these types of variables given that $i,j\in \{1,2,\cdots, N\}$, which totally requires $O(HNd_k)$ space for storage. Combining with the space \(\mathcal{O}(NH(2d_k+d_v))\) required for the original QKV vectors, as indicated by Proposition \ref{prop:mha_space}, RoPE still requires \(\mathcal{O}(NH(2d_k+d_v))\) space for parallel computation. Proof completed.
\end{proof}

RoPE achieves such an efficient space complexity because it encodes relative position by embedding the \textit{absolute} position of tokens into their QK vectors using rotary matrices, as shown in Equation \eqref{eq:absolute_pos_emb}.
By this way, relative positions are naturally encoded into the attention weights after conducting dot product by the promise of rotary matrix properties.
Compared to RPE that individually calculates the individual relative positions between all pairs of tokens, which introduces $O(N^2)$ intermediate variables of QKV vectors, this absolute position-based manner only introduces $O(N)$ such variables that maintain the same complexity of multi-head attention.
Thus, RoPE accesses both compact space and proper encoding of relative positions.

\subsection{Infeasibility of RoPE to handle periodicity}
\label{sec:rope_limitation}

In this section, we illustrate the limitation of RoPE in dealing with angles $\theta$ caused by its absolute position-based encoding.

Compared to RPE that can simply address the periodicity of angles by normalizing the relative angle by $\theta_i-\theta_j \bmod 2\pi$, RoPE \textbf{cannot operate directly} on the relative position (angle) because it embeds this feature implicitly.
Only the absolute angle is visible to RoPE, while the relative one is invisibly encoded by conducting attention between absolute angle-embedded tokens.
Here is an example of this infeasibility.

\begin{example}
Consider three elements \( E_0, E_1, E_2 \) with absolute angles \( \theta_0 = \frac{\pi}{2} \), \( \theta_1 =  0\), and \( \theta_2 = \frac{3\pi}{2} \). We have that
\begin{equation}
\theta_0 - \theta_1 \bmod 2\pi = \theta_1-\theta_2\bmod 2\pi = \frac{\pi}{2},
\end{equation}
which means that the two pairs have identical relative angles.
Consider RoPE that outputs:
\begin{gather}
    \langle \hat{Q}_0, \hat{K}_1 \rangle =f\left(Q_0,K_1, \left\{R\left(\frac{\pi}{2}\theta_l\right)\right\}_{l=0}^{d_k-1}\right),\\
     \langle \hat{Q}_1, \hat{K}_2 \rangle =f\left(Q_1,K_2, \left\{R\left(-\frac{3\pi}{2}\theta_l\right)\right\}_{l=0}^{d_k-1}\right),
\end{gather}
where $R(\theta)$ is the rotary matrix, and $\theta_l = 10000^{-l/d_k}$.
Hence, we easily derive that $\frac{\pi}{2}\theta_l \bmod 2\pi \neq -\frac{3\pi}{2}\theta_l \bmod 2\pi$ for $l\neq 0$.
This result derives:
\begin{equation}
  \left\{R\left(\frac{\pi}{2}\theta_l\right)\right\}_{l=0}^{d_k-1}  \neq \left\{R\left(-\frac{3\pi}{2}\theta_l\right)\right\}_{l=0}^{d_k-1},
\end{equation}
by the property of rotary matrices.
Combining with the formulation\footnote{See the proof of Corollary \ref{coro:RoPE_valid} for details.} of $f(\cdot)$, this result judges the in-equivalence of these two identical relative positions in RoPE.
\end{example}

Therefore, RoPE is not yet feasible for the scenario where angles presents as part of positions, requiring further refinement.

\subsection{Directional rotary position embedding}
\label{sec:rope_spatial}

In this section, we introduce the Directional Rotary Position Embedding (DRoPE) to accurately embed angular information for agent interaction modeling.

Consider the original global position embedding function in RoPE, defined as \( f^\rightarrow\langle R,\theta_l\rangle \), which employs a consecutive 2D rotary transformation with varying scalar values $\{\theta_l\}_{l=1}^{d_k}$. These differing scalar values disrupt the inherent periodicity of the rotary transformation with respect to relative angles. To recover this essential periodic property, DRoPE unifies the scalar value across dimensions, resulting in the following simplified global angle embedding function:
\begin{equation}
\label{eq:absolute_angle_emb}
f^\angle(X,\theta) = \operatorname{BlockDiag}\big(R(\theta), \dots, R(\theta)\big)X,
\end{equation}
where \(\theta\) represents the global angle corresponding to an agent’s heading, \(X \in \mathbb{R}^{2d_k}\) denotes the query-key (QK) vectors of agents, and \(R(\cdot)\) is the standard 2D rotary transformation.

By setting all scalar parameters \(\theta_l\) to unity, Equation \eqref{eq:absolute_angle_emb} reinstates the periodic nature of rotary transformations concerning relative angular differences. This property is formalized in the following proposition:

\begin{proposition}
\label{prop:DRoPE_valid}
For tokens \(i,j\) with QK vectors \(Q_i,K_j\in \mathbb{R}^{2d_k}\), let their global heading angles be denoted as \(\theta_i,\theta_j\), respectively. Define \(\bar{Q}_i\) and \(\bar{K}_j\) as:
\begin{equation}
    \bar{Q}_i = f^\angle(Q_i,\theta_i),\,\, \bar{K}_j = f^\angle(K_j,\theta_j),
\end{equation}
where \(f^\angle(\cdot)\) is the global angle embedding function defined in Equation \eqref{eq:absolute_angle_emb}. Then, their dot product satisfies:
\begin{equation}
    \langle\bar{Q}_i, \bar{K}_j\rangle = f(Q_i,K_j,\theta_i-\theta_j \bmod 2\pi),
\end{equation}
which depends solely on \(Q_i, K_j\), and the periodic relative angle.
\end{proposition}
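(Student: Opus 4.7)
The plan is to mirror the derivation used for Corollary \ref{coro:RoPE_valid}, then exploit the fact that DRoPE uses a \emph{uniform} scalar across all 2D blocks to recover the missing $2\pi$-periodicity. Let me denote by $B^\angle(\theta)=\operatorname{BlockDiag}(R(\theta),\dots,R(\theta))$ the block diagonal matrix used in $f^\angle$, so that $\bar{Q}_i=B^\angle(\theta_i)Q_i$ and $\bar{K}_j=B^\angle(\theta_j)K_j$.

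First I would expand the inner product: $\langle\bar{Q}_i,\bar{K}_j\rangle = Q_i^T B^\angle(\theta_i)^T B^\angle(\theta_j) K_j$. Next, I would apply the two standard algebraic identities already invoked in the RoPE corollary, namely $R(\theta)^T=R(-\theta)$ and $R(\alpha)R(\beta)=R(\alpha+\beta)$, to each diagonal block. Because \emph{every} block in $B^\angle$ carries the same angle (unlike the RoPE case where the $l$-th block carries $\theta_l\cdot\text{pos}$), the product collapses cleanly to $B^\angle(\theta_i)^T B^\angle(\theta_j)=\operatorname{BlockDiag}\bigl(R(\theta_j-\theta_i),\dots,R(\theta_j-\theta_i)\bigr)$, giving
\begin{equation}
\langle\bar{Q}_i,\bar{K}_j\rangle = Q_i^T\,\operatorname{BlockDiag}\bigl(R(\theta_j-\theta_i),\dots,R(\theta_j-\theta_i)\bigr)\,K_j.
\end{equation}

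The decisive step, and the one that truly distinguishes DRoPE from RoPE, is to then invoke the $2\pi$-periodicity of the 2D rotary matrix, $R(\theta)=R(\theta+2\pi k)$ for any integer $k$. This is precisely what fails for the original RoPE embedding in the angular setting (as shown in Section \ref{sec:rope_limitation}), because the scalars $\theta_l=10000^{-l/d_k}$ rescale the angle by factors that are generically incommensurable with $2\pi$. With the uniform scalar, however, I can replace $R(\theta_j-\theta_i)$ with $R((\theta_j-\theta_i)\bmod 2\pi)$ in every block, so the whole expression becomes a function of $Q_i$, $K_j$, and $(\theta_i-\theta_j)\bmod 2\pi$ alone, matching the form required by Definition \ref{def:general_RPE_functions} together with Remark \ref{remark:rpe_relative_position}.

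I do not anticipate a genuine technical obstacle: the computation is essentially the RoPE derivation, and the novelty lies entirely in being able to legitimately apply $\bmod\,2\pi$ at the end. The main point to state carefully is \emph{why} uniformity of the scalar is necessary, i.e., noting explicitly that without it the per-block angles $\theta_l(\theta_i-\theta_j)$ would not collectively satisfy a $2\pi$-periodicity relation, so that the proof identifies not only the correctness of DRoPE but also the precise algebraic reason RoPE fails for headings.
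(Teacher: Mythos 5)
Your proposal is correct and follows essentially the same route as the paper's own proof: expand the dot product, collapse the block-diagonal product via $R(\theta)^T=R(-\theta)$ and $R(\alpha)R(\beta)=R(\alpha+\beta)$ to obtain $\operatorname{BlockDiag}\bigl(R(\theta_j-\theta_i),\dots\bigr)$, and then invoke $R(\theta)=R(\theta\bmod 2\pi)$. Your added remark about why the uniform scalar is the decisive ingredient is a useful clarification but does not change the argument.
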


\begin{proof}
    The dot product \(\langle\bar{Q}_i, \bar{K}_j\rangle\) can be expanded as follows:
    \begin{equation*}
    \begin{aligned}
        \langle\bar{Q}_i, \bar{K}_j\rangle 
        &= Q_i^T  \operatorname{BlockDiag}\big(R(\theta_i)^T,\dots,R(\theta_i)^T\big)\\         &\quad\operatorname{BlockDiag}\big(R(\theta_j),\dots,R(\theta_j)\big) K_j\\
        &= Q_i^T  \operatorname{BlockDiag}\big(R(-\theta_i)R(\theta_j),\dots,R(-\theta_i)R(\theta_j)\big) K_j\\
        &= Q_i^T  \operatorname{BlockDiag}\big(R(\theta_j-\theta_i),\dots,R(\theta_j-\theta_i)\big) K_j,
    \end{aligned}
    \end{equation*}
    leveraging the property of 2D rotary transformations. Given the periodicity property \(R(\theta) = R(\theta \bmod 2\pi)\), we further have:
    \begin{equation*}
        \langle\bar{Q}_i, \bar{K}_j\rangle = Q_i^T  \operatorname{BlockDiag}\big(\{R(\theta_j-\theta_i \bmod 2\pi)\}\big) K_j,
    \end{equation*}
    which explicitly depends only on \(Q_i, K_j\), and the periodic relative angle \(\theta_i-\theta_j \bmod 2\pi\). This completes the proof.
\end{proof}

Proposition \ref{prop:DRoPE_valid} formally demonstrates that DRoPE correctly encodes relative angular information into the embedding space. Consequently, combining DRoPE with RoPE enables simultaneous embedding of agents' relative spatial positions and moving directionalities in an efficeint way. Detailed descriptions of implementable model architectures incorporating DRoPE and RoPE are provided in the subsequent section.

\begin{figure}[!t]
    \centering
    \includegraphics[width=1.\linewidth]{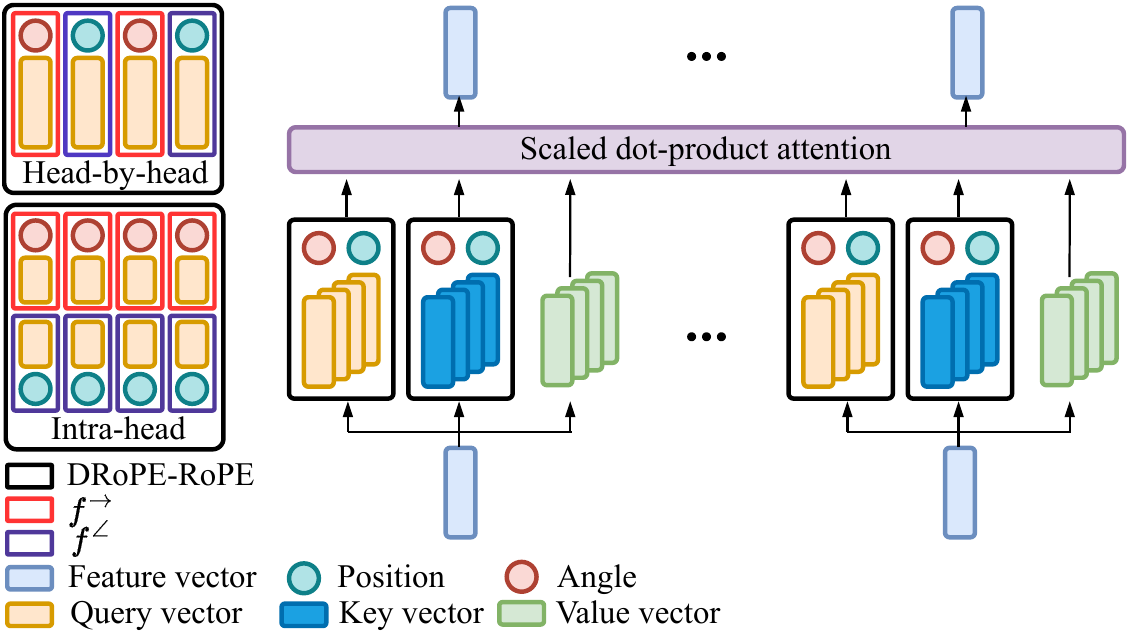}
    \caption{Comparison of two integration methods for DRoPE and RoPE.}
    \label{fig:integration}
\end{figure}

\section{Trajectory Generation with DRoPE-RoPE}
This section presents the practical integration of DRoPE and RoPE into trajectory generation models. We first describe how RoPE and DRoPE are jointly incorporated into multi-head attention modules to accurately embed both relative positions and angles. Subsequently, we outline the overall model architecture built around this combined DRoPE-RoPE attention module.
\subsection{DRoPE-RoPE multi-head attention}
\label{sec:drope_rope}

We introduce two alternative manners to implement DRoPE-RoPE attention module for relative position and angle embedding, named head-by-head and intra-head integration. These two approaches are illustrated in Fig.~\ref{fig:integration}.

Here, we denote the embedding vectors of agents as \(\mathcal{A}=\{A_i\}_{i=1}^N\), with corresponding query, key, and value (QKV) vectors represented as \(\{Q_{\mathcal{A},i}, K_{\mathcal{A},i}, V_{\mathcal{A},i}\}_{i=1}^{N}\). Additionally, let \(\mathcal{P}_\mathcal{A}=\{\text{pos}_{\mathcal{A},i}\}_{i=1}^N\) and \(\mathcal{D}_\mathcal{A}=\{\theta_{\mathcal{A},i}\}_{i=1}^N\) denote the global coordinate positions and heading angles of agents, respectively.
To provide a more concrete explanation of cross-attention, we introduce an additional set of input tokens, denoted as \(\mathcal{M}\), which can represent elements such as scene map line tokens. Similarly, we define their corresponding global coordinate positions and angular information as \(\mathcal{P}_\mathcal{M}\) and \(\mathcal{D}_\mathcal{M}\), respectively.

\noindent\textit{Head-by-head integration.} \quad 
A natural approach is to apply the positional transformation \( f^{\rightarrow} \) and the angular transformation \( f^\angle \) separately to the QK vectors of different heads. The attention scores are computed as follows:

\begin{gather}
\alpha_{ij}^h =
\begin{cases}
\mathrm{softmax} \left(
  \frac{ \langle \hat{Q}_{\mathcal{A},i}^h, \hat{K}_{\mathcal{A},j}^h \rangle}{\sqrt{d_k}}
\right), & \text{if } h \bmod 2 = 0 \\[10pt]
\mathrm{softmax} \left(
  \frac{ \langle \bar{Q}_{\mathcal{A},i}^{h}, \bar{K}_{\mathcal{A},j}^{h} \rangle}{\sqrt{d_k}}
\right), & \text{otherwise}
\end{cases} \\
\text{where }
\hat{Q}^h_{\mathcal{A},i} = f^{\rightarrow}(Q_{\mathcal{A},i}^h, \text{pos}_{\mathcal{A},i}), \quad \hat{K}^{h}_{\mathcal{A},j} = f^{\rightarrow}(K_{\mathcal{A},j}^h, \text{pos}_{\mathcal{A},j}), \\[5pt]
\bar{Q}^{h}_{\mathcal{A},i} = f^\angle(Q_{\mathcal{A},i}^h, \theta_{\mathcal{A},i}), \quad \bar{K}^{h}_{\mathcal{A},j} = f^\angle(K_{\mathcal{A},j}^h, \theta_{\mathcal{A},j}).
\end{gather}

By separately applying RoPE and DRoPE to the query and key vectors of different attention heads, the relative positional and angular relationships between agent $i$ and other agents are implicitly encoded in the attention scores. These relationships subsequently influence the output vectors $\{O_i^h\}_{h=1}^{H}$ as computed via Eq.~\eqref{eq:absolute_pos_emb}. After combining these outputs across attention heads using Eq.~\eqref{eq:mult_attention_outcome} and passing the result through a feed-forward network (FFN), the final representation \( O_i \) effectively encodes both relative positional and angular relationships among agents. We denote this entire operation as $\text{MHSA}_\text{S}^\text{HbH}(\mathcal{A}, \mathcal{P}_\mathcal{A}, \mathcal{D}_\mathcal{A})$. 
Similarly, if we replace the key and value (KV) vectors in the above process with those corresponding to \(\mathcal{M}\), we can define an analogous operation, denoted as 
\( \text{MHCA}_\text{S}^\text{HbH}(\mathcal{A}, \mathcal{M}, \mathcal{P}_\mathcal{A}, \mathcal{P}_\mathcal{M}, \mathcal{D}_\mathcal{A}, \mathcal{D}_\mathcal{M}) \).

\noindent\textit{Intra-head integration.} \quad 
An alternative approach is to decompose the QK vectors into two sub-vectors:
\begin{equation}
Q_{\mathcal{A},i}^h = \big[ Q_{\mathcal{A},i}^{h,\text{pos}}, Q_{\mathcal{A},i}^{h,\text{angle}} \big], \quad
K_{\mathcal{A},j}^h = \big[ K_{\mathcal{A},j}^{h,\text{pos}}, K_{\mathcal{A},j}^{h,\text{angle}} \big],
\end{equation}
where \( Q_{\mathcal{A},i}^{h,\text{pos}}, K_{\mathcal{A},j}^{h,\text{pos}} \in \mathbb{R}^{d_{\text{pos}}} \) , \( Q_{\mathcal{A},i}^{h,\text{angle}}, K_{\mathcal{A},j}^{h,\text{angle}} \in \mathbb{R}^{d_{\text{angle}}} \) and \( Q_{\mathcal{A},i}^{h}, K_{\mathcal{A},j}^{h} \in \mathbb{R}^{2d_k} \),satisfying:
\begin{equation}
d_{\text{pos}} + d_{\text{angle}} = 2d_k.
\end{equation}
Each sub-vector is then transformed separately using \( f^{\rightarrow} \) and \( f^\angle \), leveraging the additivity property of inner products. The attention score is computed as follows:
\begin{equation}
\alpha_{ij}^h = \mathrm{softmax} \left(
\frac{ \langle \hat{Q}_{\mathcal{A},i}^{h,\text{pos}}, \hat{K}_{\mathcal{A},j}^{h,\text{pos}} \rangle 
+ \langle \bar{Q}_{\mathcal{A},i}^{h,\text{angle}}, \bar{K}_{\mathcal{A},j}^{h,\text{angle}} \rangle }{\sqrt{d_k}}
\right),
\end{equation}

where
\begin{equation}
\hat{Q}_{\mathcal{A},i}^{h,\text{pos}} = f^{\rightarrow}(Q_{\mathcal{A},i}^{h,\text{pos}}, \text{pos}_{\mathcal{A},i}), \quad 
\hat{K}_{\mathcal{A},j}^{h,\text{pos}} = f^{\rightarrow}(K_{\mathcal{A},j}^{h,\text{pos}}, \text{pos}_{\mathcal{A},j}),
\end{equation}
\begin{equation}
\bar{Q}_{\mathcal{A},i}^{h,\text{angle}} = f^\angle(Q_{\mathcal{A},i}^{h,\text{angle}}, \theta_{\mathcal{A},i}), \quad
\bar{K}_{\mathcal{A},j}^{h,\text{angle}} = f^\angle(K_{\mathcal{A},j}^{h,\text{angle}}, \theta_{\mathcal{A},j}).
\end{equation}

This integration method ensures that relative positional relationships are effectively embedded within the computation of attention scores. We denote this operation as \( \text{MHSA}_\text{S}^\text{IH}(\mathcal{A}, \mathcal{P}_\mathcal{A}, \mathcal{D}_\mathcal{A}) \). 
Similarly, we define the cross-attention counterpart as  
\( \text{MHCA}_\text{S}^\text{IH}(\mathcal{A}, \mathcal{M}, \mathcal{P}_\mathcal{A}, \mathcal{P}_\mathcal{M}, \mathcal{D}_\mathcal{A}, \mathcal{D}_\mathcal{M}) \).

\subsection{Problem definition}

In this section, we introduce the trajectory generation task, to which we apply DRoPE and RoPE. Zhao et al.~\cite{10803038} redefined the trajectory generation problem using a kinematic model as follows:
\begin{equation}
\begin{aligned}
\label{eq:task_reformal}
& \underset{\theta}{\arg \max }\, \prod_{t=0}^{T-1}P_{\theta}\left(U^\text{a}_t \mid \mathcal{S}^\text{w}_{\leq t}, \mathcal{S}^\text{a}_{\leq t}\right) \\
& \text{subject to} \quad S^\text{a}_{\tau +1} = \mathcal{K}(S_\tau^\text{a}, U_\tau^\text{a}) \\
& t \in \{0,1,\dots, T-1\},\, \tau\in \{0,1,\dots,t\}
\end{aligned}
\end{equation}
where \( S^a_{\leq t} \) represents the historical states of the target agent, including position, yaw, velocity, etc. \( U^a_t \) denotes the control actions of agent \( a \), which, following the definition in KiGRAS \cite{10803038}, consist of acceleration and yaw rate. \( S^w \) represents the world state, encompassing other agents and environmental information such as the map. \( T \) denotes the time window length, which is set to 8s in our case. The function \( \mathcal{K} \) defines the kinematic model that propagates the state of agent \( a \) from \( S_\tau^\text{a} \) at time step \( \tau \) to \( S^\text{a}_{\tau +1} \) based on the control action \( U_\tau^\text{a} \).

\subsection{Model architecture}

In this section, we introduce the architecture of our model. For ease of description, we do not distinguish between head-by-head and intra-head integration in this section. Instead, we use \( \text{MHSA}_\text{S} \) and \( \text{MHCA}_\text{S} \) as unified notations. A detailed analysis of the performance of these two integration methods is presented in Sec.~\ref{exp:different_rpe}.

We encode the static attributes and velocity information of $n$ agents over a time sequence $T$ into agent tokens, denoted as $\mathcal{A} = \{A_i^t\}_{i=1,t=1}^{n,T}$. Similarly, we encode $m$ map segments into map tokens using a subgraph-based encoding, denoted as $\mathcal{M} = \{M_i\}_{i=1}^{m}$. Notably, these tokens do not contain explicit positional or angular information, as spatial interactions are incorporated later.
First, at each time step $t$, we model interactions among agent tokens. Let $\mathcal{P}_{A^t}$ represent the global coordinates of agent tokens at time $t$, i.e., $\mathcal{A}^t = \{A_i^t\}_{i=1}^{n}$, and let $\mathcal{D}_{A^t}$ denote their global orientation angles. The agent-agent interaction can be formalized as:
\begin{gather}
    \text{MHSA}_\text{S}(\mathcal{A}^t, \mathcal{P}_{A^t}, \mathcal{D}_{A^t})
\end{gather}
Next, we model interactions among the $m$ map tokens. Let $\mathcal{P}_M$ represent the global coordinates of the map tokens $\mathcal{M} = \{M_i\}_{i=1}^{m}$, and let $\mathcal{D}_M$ denote their global orientation angles:
\begin{gather}
    \text{MHSA}_\text{S}(\mathcal{M}, \mathcal{P}_M, \mathcal{D}_{M})
\end{gather}
Subsequently, at each time step $t$, we model interactions between agent tokens and map tokens:
\begin{gather}
    \text{MHCA}_\text{S}(\mathcal{A}^t, \mathcal{M}, \mathcal{P}_{A^t}, \mathcal{P}_M, \mathcal{D}_{A^t}, \mathcal{D}_{M})
\end{gather}
Finally, we incorporate temporal positional encoding (PE) into all agent tokens. Subsequently, we model the temporal interactions within each agent \( i \)'s token sequence, denoted as \( \mathcal{A}_i = \{\mathcal{A}_i^t\}_{t=1}^{T} \), using a standard causal self-attention transformer layer.

After completing all interaction steps, we decode the final agent tokens via an MLP to obtain the probability distribution of each agent's control actions at each time step. The predicted distributions are then trained with the ground-truth labels using the cross-entropy loss function.

\section{Experiment}

\subsection{Dataset and metrics}
We conducted experiments using version 1.2 of the Waymo Motion Dataset \cite{sun2020scalability}. We performed 8-second closed-loop simulations with our model and submitted the results to the Waymo SimAgent Challenge for a fair comparison with other methods. Specifically, we employed minADE to assess the accuracy of trajectory predictions, REALISM to evaluate the authenticity of the generated trajectories, and the total number of model parameters to quantify model size.

In the ablation study, we utilized kinematic metrics, interactive metrics, and map-based metrics to further assess the results from three perspectives: the kinematic realism of agent motion, the realism of interactions among agents, and the realism of interactions between agents and the map. For local evaluation on the validation dataset, we utilized the tools provided by Zhang et al. \cite{zhang2025closed}.

Additionally, we measured the peak memory usage during model training and evaluation, as well as the FLOPs required during evaluation, to comprehensively compare the computational cost of different scene representation paradigms.

\begin{table}[h]
\footnotesize
\setlength{\tabcolsep}{3pt}
\caption{Comparison results of ours and state-of-the-art approaches in SimAgents Challenge.}
\centering
\begin{tabular}{@{}lrccc@{}}
\toprule
Method           & Params & SR       & minADE & REALISM \\ \midrule
SMART-tiny-CLSFT & 7M     & query-centric   & 1.3068 & \textbf{0.7702}  \\
UniMM            & 4M     & query-centric   & 1.2947 & 0.7684  \\
SMART-large      & 101M   & query-centric   & 1.3728 & 0.7614  \\
KiGRAS           & 0.7M   & agent-centric   & 1.4384 & 0.7597  \\
SMART-tiny       & 7M     & query-centric   & 1.4062 & 0.7591  \\
BehaviorGPT      & 3M     & query-centric   & 1.4147 & 0.7473  \\
GUMP             & 523M   & scene-centric   & 1.6041 & 0.7431  \\
MVTE             & 65M    & query-centric   & 1.6770 & 0.7302  \\
VBD              & 12M    & query-centric   & 1.4743 & 0.7200  \\
TrafficBOTv1.5   & 10M    & scene-centric   & 1.8825 & 0.6988  \\
DRoPE-Traj             & 3M     & query-centric   & \textbf{1.2626} & 0.7625  \\ \bottomrule
\end{tabular}
\label{tab:cmp}
\end{table}

\subsection{Implementation details}
We process data at a frequency of 2 Hz. The map polylines are divided into segments with a maximum length of 25 meters. For each segment, we use the midpoint as its representative position, and the angle between the midpoint and the next point as its heading. For stop signs in the scene, which consist of only a single point, we set their heading to 0. 

To normalize the polylines, we transform each one into its own local coordinate system, preserving its shape information while removing absolute positional data.
All agent tokens and map tokens are embedded into 64-dimensional feature vectors.

We train the model using NVIDIA H20 with a batch size of 64 and a learning rate of $1 \times 10^{-2}$. The Adan \cite{xie2024adan} optimizer is employed to optimize the model.

\subsection{Performance comparison}
\label{sec:performance_cmp}

We conducted a rigorous and fair comparison with state-of-the-art methods from the Waymo SimAgent Challenge leaderboard, including UniMM \cite{lin2025revisit}, SMART \cite{wu2024smart}, SMART-CLSFT \cite{zhang2025closed}, BehaviorGPT \cite{zhou2024behaviorgpt}, GUMP \cite{hu2024solving}, MVTE \cite{wang2023multiverse}, VBD \cite{huang2024versatile}, and TrafficBOTv1.5 \cite{zhang2024trafficbots}. Our proposed method is referred to as DRoPE-Traj.

The evaluation results are presented in Table \ref{tab:cmp}, where we also provide the scene representation (SR) method used by each approach. Unlike other query-centric methods, our approach maintains the same spatiotemporal complexity as scene-centric approaches while achieving the lowest spatiotemporal complexity overall. Additionally, our model\footnote{We employ the head-by-head integration approach.} achieves SOTA minADE performance (1.2626) while maintaining a REALISM score (0.7625) comparable to SMART-tiny-CLSFT (0.7702).

\subsection{Evaluation of space and computational efficiency}
In this section, we compare our DRoPE-RoPE-based model with the scene-centric approach and the RPE-based query-centric approach in terms of efficiency. The comparison is conducted in terms of training memory consumption, evaluation memory consumption, and FLOPs during evaluation. Specifically, we replace the transformer layers in our network with (1) standard transformer layers, representing the scene-centric approach, (2) transformer layers incorporating RPE, which follow the conventional query-centric representation, and (3) two transformer variants integrating RoPE and DRoPE, as introduced in Sec.~\ref{sec:drope_rope}. Since these two variants have identical computational and memory costs, we refer to them collectively as DRoPE-RoPE. To analyze the impact of increasing parameter size on computational costs, we vary the embedding dimension of the QKV vectors. All measurements are conducted with a fixed batch size of 1. Results are shown in Fig.~\ref{fig:computation_res}.
\begin{figure}[!t]
    \centering
    \includegraphics[width=1.\linewidth]{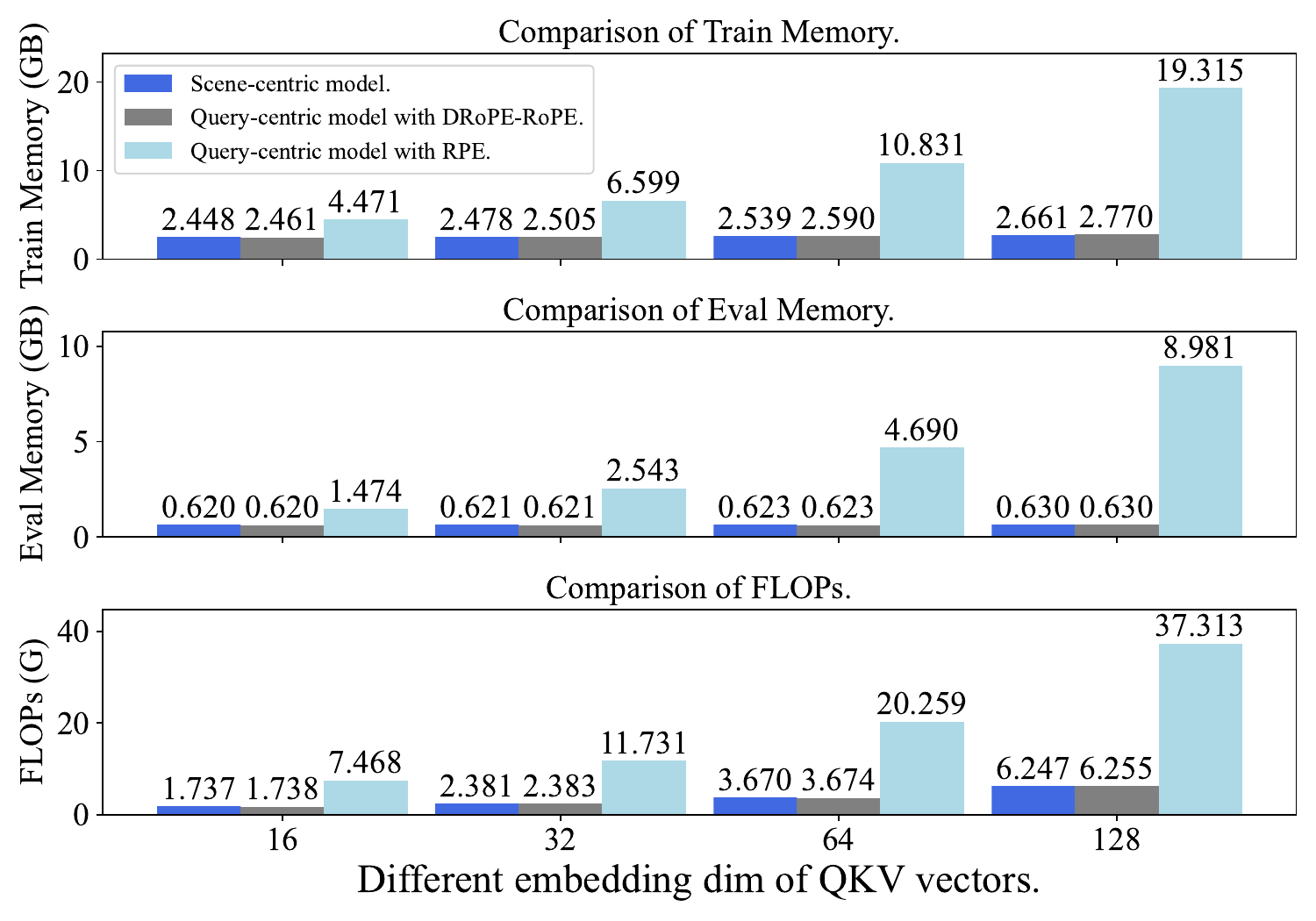}
    \caption{Comparison of training memory, evaluation memory, and FLOPs across different scene representation approaches.}
    \label{fig:computation_res}
\end{figure}
It can be observed that, in terms of both memory consumption and FLOPs, our query-centric approach with DRoPE-RoPE is nearly identical to the scene-centric approach. In contrast, as the embedding dimension of QKV increases, the memory consumption of RPE exhibits an exponential-like surge. Due to this issue, almost all RPE-based methods restrict each scene element to interact with only a limited number of nearby elements to mitigate the overwhelming memory cost.  

Regarding FLOPs, although both approaches theoretically have a time complexity of $O(N^2)$, as shown in Eq.~\ref{eq:rpe_k_embedding} and Eq.~\ref{eq:rpe_v_embedding}, RPE introduces an additional $O(N^2)$ operation by encoding pairwise relative positions into the KV vectors using an MLP. This results in a significant computational overhead, leading to a 4-6× increase in FLOPs for RPE compared to DRoPE-RoPE across different embedding dimensions.

\subsection{Ablation on different DRoPE-RoPE architectures}
\label{exp:different_rpe}
In this subsection, we evaluate the performance of intra-head and head-by-head integration, as well as RPE, on the validation set. To ensure a fair comparison, we use our backbone network while replacing only the transformer layers. The batch size is fixed at 64. We represent the scene using 1024 map tokens. Additionally, to enable stable training for RPE-based methods, each element is restricted to attending to its 50 nearest neighbors. The results are presented in Table~\ref{tab:ablation_study}.
\begin{table}[h]
\footnotesize
\setlength{\tabcolsep}{3pt}
\caption{Comparison results of different DRope-RoPE and RPE.}
\centering
\begin{tabular}{@{}lcccc@{}}
\toprule
Method       & minADE $\downarrow$ & \makecell[c]{Kinematic \\ metrics $\uparrow$} & \makecell[c]{Interactive \\ metrics $\uparrow$} & \makecell[c]{Map-based \\ metrics $\uparrow$} \\ \midrule
RPE          & 1.3910      & 0.4820      & 0.7878      & 0.8416 \\
Intra-head   & 1.4289 & 0.4804 & 0.7843 & 0.8345 \\
Head-by-head & \textbf{1.3745} & \textbf{0.4827} & \textbf{0.7894} & \textbf{0.8449} \\ \bottomrule
\end{tabular}
\label{tab:ablation_study}
\end{table}
Despite both DRoPE and RoPE being theoretically feasible for aggregation, the intra-head integration approach shows a noticeable performance drop compared to head-by-head integration. Specifically, the Kinematic metrics score decreases from 0.4827 to 0.4804, while both Interactive metrics and Map-based metrics also exhibit declines. Notably, minADE increases from 1.3745 to 1.4289. We hypothesize that this degradation occurs because intra-head integration mixes direction and positional features, which have inherently different characteristics, making it more challenging for the network to learn effectively, thereby leading to inferior performance compared to head-by-head integration.  

As for RPE, due to its excessive memory consumption, it can only attend to a limited number of nearby elements, resulting in suboptimal performance compared to head-by-head integration, which can attend to all elements. This constraint leads to an increase of 0.02m in minADE compared to head-by-head integration, along with slight declines in other metrics.

\section{Conclusion}
In this paper, we introduce Directional Rotary Position Embedding (DRoPE), a novel extension of Rotary Position Embedding (RoPE) designed to efficiently model periodic angular relations in agent trajectory generation tasks. By incorporating a uniform identity scalar into RoPE's 2D rotary transformation, DRoPE overcomes RoPE’s inherent limitations in handling angular information, making it feasible of integrating DRoPE and RoPE to encode both relative positions and headings in autonomous driving systems. Our theoretical analysis demonstrates that DRoPE retains the space and computational efficiency of scene-centric methods while effectively leveraging relative positional information. Thus, DRoPE simultaneously achieves high accuracy and optimal efficiency in both computational and space complexity.

In future work, we aim to explore further optimizations to DRoPE for more complex interaction scenarios \cite{ren2021deep, tampuu2020survey}, as well as extend its application to other domains requiring efficient periodic angle modeling. Our findings contribute a novel and effective solution to the ongoing challenge of balancing accuracy, time complexity, and space complexity in trajectory generation for autonomous driving.

\bibliographystyle{IEEEtranS} 
{\footnotesize 
\bibliography{IEEEabrv}
}

\newpage

\end{document}